\documentclass{article}

\usepackage{graphicx}
\usepackage{booktabs}

\usepackage{amsmath}
\usepackage{amsthm}
\usepackage{amsfonts}
\usepackage{amssymb}

\usepackage{algorithm}
\usepackage{algorithmic}

\usepackage{comment}
\usepackage{caption}
\usepackage{subcaption}

\usepackage{mathtools}

\newtheorem{thm}{Theorem}[section]

\newtheorem{lem}[thm]{Lemma}

\newcommand{\spaceo}{\hspace{2 mm}}
\newcommand{\setsep}{ \spaceo | \spaceo}

\newcommand{\Prob}[1]{\mathbb{P}\left( #1 \right)}
\newcommand{\Probu}[2]{\mathbb{P}_{#1}\left( #2 \right)}
\newcommand{\argmax}{\operatornamewithlimits{argmax}}

\newcommand{\Abs}[1]{\left| #1 \right|}
\newcommand{\Set}[1]{\left\{ #1 \right\}}
\newcommand{\Brack}[1]{\left( #1 \right)}

\newcommand{\norm}[1]{\left\|#1\right\|}

\newcommand{\cdt}{ \mathcal{C}^{DT}}
\newcommand{\ctw}{ \mathcal{C}^{TW}}
\newcommand{\ct}{ \mathcal{C}^{T}}
\newcommand{\cd}{ \mathcal{C}^{D}}

\newcommand{\cS}{S}
\newcommand{\cW}{W}
\newcommand{\cP}{\mathcal{P}}

\newcommand{\BD}{\mathcal{D}}

\begin{document}

\title{Interdependent Gibbs Samplers}

\author{
  Mark Kozdoba \\
  \texttt{markk@technion.ac.il}
 \and
 Shie Mannor\\
  \texttt{shie@ee.technion.ac.il}
}

\date{}

\maketitle

\begin{abstract}
Gibbs sampling, as a model learning method, is known to produce the most accurate results available in a 
variety of domains, and is a de facto standard in these domains. Yet, it is also well known that 
Gibbs random walks usually have bottlenecks, sometimes termed ``local maxima'', and thus samplers often return 
suboptimal 
solutions. In this paper we introduce a variation of the Gibbs sampler which yields high likelihood 
solutions significantly more often than the regular Gibbs sampler.  
 Specifically, we show that combining multiple 
samplers, with certain dependence (coupling) between them, results in higher likelihood solutions.  This side-steps the well known issue of identifiability, which has been the  
obstacle to combining samplers in previous work. 
We evaluate the approach on a Latent Dirichlet 
Allocation model, and also on HMM's, where precise computation of likelihoods and comparisons to the standard 
EM algorithm are possible.
\end{abstract}

\section{Introduction}
\label{sec:intro}

Gibbs sampling is a standard model learning method in Bayesian Statistics, and in particular in the field of Graphical Models, 
\cite{gelman_bayesian}. In the Machine Learning community, it is commonly applied in situations where non sample based 
algorithms, such as gradient descent and EM are not feasible. Perhaps the most prominent application example is the Latent 
Dirichlet Allocation (LDA) topic model \cite{Blei02latentdirichlet}, a model that is routinely used in text analysis, 
recommender systems, and a variety of other domains.  For LDA, Gibbs sampling is considered to be the standard and the most 
accurate approach (see, for instance, \cite{smola_parallel}, \cite{papanikolaou}).

\subsection{Local Maxima}
The standard asymptotic theory of random walks guarantees that 
Gibbs, and more generally, MCMC samplers, eventually produce a sample from 
the target distribution (see \cite{gelman_bayesian}). The number of steps 
of the sampler that is required to approximate well a sample from the 
target distribution is known as the \textit{mixing time} of the sampler. However, 
it is well known to practitioners that the true mixing times of Gibbs 
samplers in realistic situations are unpractically high. The typical 
observed behavior of the random walk is not that one obtains a fresh 
sample from the target distribution every given number of steps, but 
rather that the walk converges to a certain configuration of the 
parameters, and stays in the vicinity of these parameters for extremely 
long times, which would not have had happened if mixing times where 
short. This is illustrated in Figure \ref{fig:c1_data_likelihoods}, and a 
related discussion is given in Section \ref{sec:hmm_experiments}.

In view of this behavior, it is natural to ask how one can improve the 
quality of the solutions obtained as the local maxima. The simplest 
classical receipt is to run the sampler multiple times independently, 
and to choose the best solution. Note that it is usually not clear that 
running a sampler a few or even a few hundred times will significantly 
improve the quality of a solution. Moreover, in some situations, such as 
LDA, even the task of computing the quality of the solution (the 
loglikelihood in this case) is hard and can not be performed reliably.

Another approach would be to attempt to combine multiple solutions, to 
produce a single improved solution. However, while this general idea 
was discussed already in \cite{cgibbs}, at present there is no known 
way to carry this out, due to an issue known as the identifiability 
problem. In the next section we introduce the approach taken in this 
paper, and its relation to identifiability.

\subsection{Mulitpaths and Identifiability}
The approach of this paper, which we call the multipath sampler,  is to use multiple, \textit{interdependent} Gibbs samplers. 

Consider a Bayesian setting
where we a have generative model for the observed data
$w =(w_1,\ldots, w_N)$ and a set of latent variables $p=(p_1,\ldots,p_N)
$. The model depends on a set of parameters $\phi$, and this set 
is endowed with a prior, $\Prob{\phi}$. We call the set of latent 
variables a \textit{path}.  For LDA, $\phi$ corresponds to topics, with a 
Dirichlet prior, and for every token $w_i$, $i \leq N$, $p_i$ is the 
topic assigned to that token. 
In a standard (partially collapsed) Gibbs sampler one first samples the 
parameters from $\Prob{\phi | p, w}$, and then 
samples topic assignments $p_i$,  from $\Prob{p_i | p_{-i}, w, \phi}$, 
for each $i \leq N$. Here $p_{-i}$ are the values of all the assignments in the path except $p_i$.

In the multipath sampler, instead of a single set of latent variables
$p$ we maintain 
$m$ sets, $p^1, \ldots, p^m$.  We define an 
appropriate joint distribution on them as described in Section 
\ref{sec:multiple_paths}. Under this joint distribution, the marginal 
distribution $p^j$ for each $j \leq m$ given the data $w$, coincides with 
the distribution of latent variables $p$ in the regular Gibbs sampler. 

Next, to sample the latent variables $p^j$, we first sample
$\Prob{\phi | p^1, \ldots, p^m, w}$, and then use the usual Gibbs sampler 
independently for each $p^j$, given $\phi$. 
The dependence between the paths $p^j$ is expressed in sampling from $
\Prob{\phi | p^1, \ldots, p^m, w}$, 
where $\phi$ depends on \textit{all} the $p^j$ together. This step can be 
viewed as model averaging.  The details are given in Section 
\ref{sec:multiple_paths}. In particular, we will show that the target 
distribution of the multipath sampler emphasizes the solutions that 
have high likelihood for the regular sampler.

As noted earlier, the idea of model averaging in the context Gibbs 
sampling has been discussed in the literature at least 
since the paper \cite{cgibbs}. In particular, it is well known that it is 
generally impossible to average the results of independent 
samplers, due to the problem of \textit{identifiability} (see, for 
instance, \cite{cgibbs}, \cite{cgibbs_book}, 
\cite{gelman_bayesian}), which refers to the issue that the model 
parameters are only defined by 
the model up to a permutation. For instance, in the case of LDA, 
suppose $\phi = (\phi_1,\ldots,\phi_T)$ and 
$\phi' = (\phi'_1,\ldots,\phi'_T)$ are two sets of topics estimated by two 
independent Gibbs samplers. 
One then might attempt to combine $\phi_1$ and $\phi'_1$, for instance 
by defining $\phi''_1 = \frac{\phi_1 + \phi'_1}{2}$, to obtain
a better estimate of the topic. However, even if $\phi$ and $\phi'$ 
represent nearly the same set of topics, $\phi_1$ in
$\phi$ does not necessarily correspond to $\phi'_1$, but may rather 
correspond to some other $\phi'_j$ for some $j \neq 1$.  Thus, to perform
direct averaging, one would have to find a proper permutation expressing 
the correspondence, before averaging. 
Moreover, the situation is further complicated by the fact that $\phi$ 
and $\phi'$ do not necessarily represent 
similar sets of topics, and thus for many topics in $\phi$, there simply 
will be no ``similar'' topic in $\phi'$ to average with.

With the multipath approach, we show for the first time how the 
identifiability problem can be avoided. Indeed, instead of running the 
samplers completely independently and attempting to only combine the 
final results, we allow the samplers to share the 
parameters \textit{during} the optimization. This forces the samplers to 
find a common version of each topic, while still maintaining different 
and somewhat independent empirical estimates of it.

\subsection{Experiments}
\label{sec:intro_experiments}

We demonstrate the methods introduced in this paper on HMM and LDA 
Gibbs samplers, and in both cases the multipath sampler improves on the 
results of a regular sampler. We now motivate and briefly describe the 
experiments. 

As mentioned earlier, in a typical problem where a Gibbs sampler is used,
given the model parameters $\phi$ found by a sampler, there is no precise 
way to evaluate the likelihood of the data $w$ with respect to $\phi$. 
See \cite{wallach_evaluation}, where an equivalent notion of perplexity is
discussed. As a result, it is not usually easy to compare two different 
solutions, $\phi$ and $\phi'$, possibly found by two different methods, 
and perplexity, if used at all, is never used as a sole method of 
evaluation.

Therefore, to evaluate the performance of the multipath sampler we first 
compare it to the regular Gibbs sampler on a model where explicit
computation of data likelihoods is possible. Specifically, we use 
Hidden Markov Models (HMMs), with synthetic data, where the parameters 
$\phi$ to be estimated are the emission and transition probabilities. We 
compare the solutions form the Gibbs samplers to those found by the 
EM-type Baum-Welch algorithm, and to the ground truth.

Next, we compare the Gibbs and the multipath Gibbs sampler LDA model on
synthetic data and measure the closeness of the recovered topics 
to the ground truth. 

In addition, on the well known State of The Union corpus, \cite{TOT}, we 
compare the \textit{time concentration} of the topics as an 
additional quality indicator. 
Specifically, this dataset consists of 
speeches given between the years 1790 to 2013, and each speech is 
further divided into multiple documents. When such time information is 
available, it is natural to ask whether certain topics appear in a 
specific time period, or their occurrence is spread relatively uniformly 
over time.  We call a topic time-concentrated if most of its occurrences 
are contained in a short time period. Consider Figure 
\ref{fig:welfare_topic_compare} for an example of a relatively well 
time-concentrated topic. 

Next, note that the LDA model itself has no access to timestamps, and 
that if token assignments were given at random, then the topics would 
be approximately uniformly spread over time. Therefore, time 
concentration of the topics may be observed only if the model fits the 
data well, and better time concentration implies a better fit to the 
data. In Section \ref{sec:lda_experiments} we show that topics and 
topic assignments found by the multipath sampler have significantly 
better time-concentration properties compared to topics from the regular 
sampler. 

We note that time concentration for topics is a well studied subject. For 
instance, the well known Topics over Time (ToT) model, \cite{TOT}, and 
its numerous extensions, incorporate the timestamp data into the model 
via appropriate probabilistic constraints. These additional 
constraints may be viewed as an expression of a prior belief about the 
data, in particular that a topic's usage pattern has certain temporal 
properties. Using such models, one indeed obtains topics that are better 
concentrated in time compared to standard LDA. However, note that 
by extending the model, one effectively forces a bias towards learning
concentrated topics. In contrast, here we show that one can obtain 
improved concentration by an improved learning algorithm of LDA itself, 
\textit{without} additional assumptions. This demonstrates that 
time-concentration of the topics is a signal contained in the data, 
rather than an artifact of the extended model.

The rest of the paper is organized as follows: In Section 
\ref{sec:literature} we review the literature. In Section 
\ref{sec:multiple_paths} we formally define the mulipath sampler and 
study its relation to the regular sampler. Section 
\ref{sec:experiments} contains the experiments, and concluding remarks 
and future work are discussed in Section \ref{sec:conclusion}.

\section{Literature}
\label{sec:literature}
A few days after the first posting of this paper, we have been informed 
that a similar approach has already been considered previously. While the 
experiments and the point of view taken in the discussion in this paper 
are different, the core idea, Algorithm \ref{alg:m_path_gibbs}, is 
identical to the algorithm introduced in \cite{same_alg}.

As detailed below, there is a large body of work on performance improvements for Gibbs samplers, and in particular for the Gibbs samplers 
of the LDA model. 

The uncollapsed Gibbs sampler for LDA was introduced in \cite{uncollapsed_gibbs} where it was applied to genetic data. 
The collapsed Gibbs sampler was introduced in \cite{cgibbs}. 

Parallelization was studied, for instance, in \cite{Newman_parallel},  
\cite{smola_parallel}. In \cite{parallel_mcmc_neis} 
parallelization methods with asymptotical exactness guarantees were 
given. Note that while 
parallelizing the fully collapsed Gibbs sampler presents 
several significant issues which are addressed in the above work, for LDA 
there is a partially collapsed sampler, studied in
\cite{sparse_patially_collapsed_mcmc11}, which is straightforward to 
parallelize. As observed in \cite{sparse_patially_collapsed_mcmc15}, the 
partially collapsed samplers may have a performance, in terms of quality 
of solutions, comparable to the performance of the fully collapsed 
sampler.

Methods that exploit sparseness of distributions to speed up the 
iteration of the sampler were studied, among other work, in 
\cite{Yao_sarsity}, \cite{fast_gibbs1}, and further optimized in 
\cite{warp_lda}.  These methods perform the same random walk as the 
collapsed sampler, but the sampling step itself is implemented more 
efficiently.

In a recent work \cite{papanikolaou}, parameter inference by averaging several solutions was discussed for the case of LDA. 
It is important to note here that this refers to a very restricted form of averaging. Indeed, in that work, one first 
obtains a solution from a collapsed Gibbs sampler, and a postprocessing step  considers perturbations in this solution 
\textit{at a single coordinate}, to obtain some variability in the assignment. Thus all the perturbed solutions differ from 
the original solution at a single coordinate at most. It is then argued that such perturbations can be averaged, since 
changing only one assignment coordinate in a large corpus is unlikely to cause the identifiability issues associated with 
averaging. Clearly, however, due to the same reasons, such perturbations can not change the particular local maxima of the 
original solution and can only provide small variations around it. 
Thus, while this approach is useful as a local denoising 
method, especially in computing topic assignment distributions (rather than topic themselves), and is suggested as such in \cite{papanikolaou}, the overall quality of the solution is the same as the one returned by the original sampler. As discussed in the Introduction, our approach 
completely avoids the identifiability issues, and allows to combine information from significantly differing topic
assignments. 

Finally, we note that all of the above mentioned methods can be combined 
with the methods introduced in this paper. In particular both multipath 
samplers may be parallelized and may incorporate sparsity in exactly the 
same way this is done for the regular sampler.

\section{Multiple Paths}
\label{sec:multiple_paths}

We will assume that the data is generated in a standard Bayesian setting: 
A set of model parameters is denoted by $\Phi$,  endowed with a prior 
distribution $\Prob{\phi}$. Given $\phi$, the model specifies a 
distribution of latent variables $p = \Brack{p_i}_{i=1}^N$, $\Prob{p | 
\phi}$. Each $p_i$ takes values in the set $\Set{1,\ldots,\cS}$. 
By analogy to HMMs, we refer to the latent variables $p$ as a 
\textit{path}, and denote the set of all possible values of a path 
sequence by $\cP = \cP_N = \Set{1,\ldots,\cS}^N$. 

Finally, given the parameters and the path, the model specifies the 
distribution of the observed variables $\Prob{w | p, \phi}$, 
where $w = \Brack{w_i}_{i=1}^N$. We also refer to $w$ as the data 
variables. Each $w_i$ takes values in the alphabet  $\Set{1,\ldots,\cW}$.

In the case of HMMs, the parameters $\phi$ are be the transition and 
emission distributions. Specifically, we consider HMMs with $\cS$ states, 
and with emissions given by discrete distributions on the alphabet 
$\Set{1,\ldots,\cW}$. Then 
$\phi = (\beta', \beta_1, \ldots, \beta_{\cS}, e_1, \ldots, e_{\cS})$, 
where $\beta'$ is the initial distribution on the Markov chain, 
for each $j \leq \cS$, $\beta_j$ is the transition distribution given the 
chain is at state $j$, and $e_j$ are emission distributions for state 
$j$. The prior is given by $\beta' \sim Dir_{\cS}(1)$, $\beta_j \sim 
Dir_{\cS}(1)$, and $e_j \sim Dir_{\cW}(1)$. The path $p$ is a state 
sequence sampled from the Markov chain, and $w$ are the emissions given 
the state sequence, so that the distribution of given $p_i = j$ is 
$w_i \sim e_{j}$.

For LDA, the parameters are the topics, $\phi = (\phi_1, \ldots, 
\phi_{\cS})$, with a prior $\phi_j \sim Dir_{\cW}(\eta)$, for some fixed 
$\eta >0$. The path is the set of topic assignments 
for each token in the corpus, and $w$ are the tokens, where $w_i$ is 
sampled from the topic $p_i$.

Next, the following definitions will be useful. We refer to the quantity 
\begin{equation}
\label{eq:path_likelihood}
\Prob{w,p | \phi}
\end{equation}
as the \textit{path likelihood}, and to 
\begin{equation}
\label{eq:data_likelihood}
\Prob{w | \phi} = \sum_{p\in \cP} \Prob{w,p | \phi}
\end{equation}
the \textit{data likelihood}.

Given the data $w$, we interested in the maximum likelihood parameter 
values, $\phi_{max}$, 
\begin{equation}
\label{eq:phi_argmax} 
 \phi_{max} = \argmax_{\phi}  \Prob{w | \phi}, 
\end{equation}
and in a sample of $p$ given $w$ and $\phi_{\max}$. 
A standard approach to obtaining an approximation of $\phi_{max}$ is 
to use a Gibbs sampler to sample from $\Prob{\phi, p | w}$.

Now, on the set of parameters and paths, $\Phi\times \cP$, define the function 
\begin{equation}
\nonumber
f(\phi,p) = \Prob{w,p, \phi} = \Prob{w,p | \phi} \cdot \Prob{\phi}.
\end{equation}
Denote by
\begin{equation}
\nonumber
C_f = \sum_{p} \int_{\Phi} f(\phi,p) d\phi = \Prob{w},	
\end{equation}
where $d \phi$ is the Lebesgue measure on $\Phi$ 
(that is, $\Prob{\phi}$ is a density with respect to $d\phi$ ). 

Then the probability density $\hat{f}(\phi,p) = \frac{1}{C_f} f(\phi,p)$ is, by definition, the density of the \textit{posterior} 
distribution $\Prob{\phi,p | w}$.

Observe that the posterior density of $\phi$ is proportional to the 
data likelihood of $\phi$:
\begin{eqnarray}
\label{eq:power_1_data_likelihood}
\hat{f}(\phi) = \sum_p \hat{f}(\phi,p) = \frac{1}{C_f} \sum_p \Prob{ \phi, p , \BD } = \\
= \frac{1}{C_f} \cdot \Prob{\phi} \cdot \Prob{w | \phi}. \nonumber
\end{eqnarray}

We are now ready to introduce the multiple paths model. The generative 
model consist of simply sampling the parameters 
$\phi$ from $\Prob{\phi}$, and then sampling $m$ independent copies 
of the paths, $p^j$, and the data, $w^j$. Here the data sequence 
$\Set{w_i^j}_{i \leq N}$ is sampled from the original generative model
given $\phi$ and $p^j$.  Therefore, the model generates $m$ paths, and $m
$ versions of the data. In what follows we will condition this model 
on the event that all the versions of the data coincide and are equal
to a given data sequence $w$. 

Specifically, given a data sequence $w$, we seek the posterior distribution of $\phi$ in the $m$ path model, given the data $w^j$, such that $w^j = w$ for 
all paths $j \leq m$. 

For $m\geq 1$, define on $\Phi \times \cP^{m}$ the function
\begin{equation}
\label{eq:fm_def}
f_m(\phi, p^1, \ldots, p^m) = \Prob{\phi} \cdot \prod_{j \leq m} \Prob{p^j, w | \phi}. 
\end{equation}
Denote by $C_{f_m}$ the corresponding normalization constant, 
\begin{eqnarray}
\nonumber
C_{f_m} =  \int_{\Phi} \sum_{p^1, \ldots, p^m} f_m(\phi, p^1, \ldots, p^m) d\phi. 
\end{eqnarray}

Then $\hat{f}_m = \frac{1}{C_{f_m}} f_m$ gives the distribution
of $(\phi, p^1, \ldots, p^m)$ with respect to the multipath generative 
model, conditioned on the event $w^j = w$ for all $j\leq m$. 

The following observation is the key fact of this section, and we record it as a lemma. 
\begin{lem}
Let $(\phi,p^1,\ldots,p^m)$ be a sample from $\hat{f}_m$. Then the marginal density of $\phi$ is
\begin{equation}
 \label{eq:power_m_data_likelihood}
 \Probu{m}{\phi | w}  = \frac{1}{C_{f_m}} \Prob{\phi} \Prob{w | \phi}^m.
\end{equation}
\end{lem}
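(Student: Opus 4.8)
The plan is to obtain the marginal density of $\phi$ by summing the joint density $\hat{f}_m$ over all path tuples $(p^1,\ldots,p^m)\in\cP^m$. Since $\hat{f}_m = \frac{1}{C_{f_m}} f_m$ and $\cP^m$ is a finite set, it suffices to evaluate $\sum_{p^1,\ldots,p^m} f_m(\phi,p^1,\ldots,p^m)$ and then divide by the constant $C_{f_m}$.

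First I would insert the product form (\ref{eq:fm_def}) and pull the factor $\Prob{\phi}$ — which does not depend on any $p^j$ — outside the sum. What remains is $\sum_{p^1,\ldots,p^m}\prod_{j\leq m}\Prob{p^j,w|\phi}$, a sum over $\cP^m$ of a product whose $j$-th factor depends on $p^j$ alone. By distributivity this factorizes as $\prod_{j\leq m}\left(\sum_{p\in\cP}\Prob{p,w|\phi}\right)$, and each inner sum is exactly the data likelihood (\ref{eq:data_likelihood}), namely $\Prob{w|\phi}$. Since the $m$ factors are identical, the product equals $\Prob{w|\phi}^m$. Hence $\sum_{p^1,\ldots,p^m} f_m(\phi,p^1,\ldots,p^m) = \Prob{\phi}\,\Prob{w|\phi}^m$, and dividing by $C_{f_m}$ gives (\ref{eq:power_m_data_likelihood}).

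I do not expect any real obstacle: the statement is an immediate consequence of the product structure of $f_m$ together with the elementary factorization of a finite sum of products. The only points I would be careful to spell out are, first, that the conditioning event $\{w^j = w\ \text{for all}\ j\le m\}$ is exactly what forces the \emph{same} data sequence $w$ into every factor $\Prob{p^j,w|\phi}$ — this is the mechanism by which the paths become coupled through $\phi$ — and second, that $C_{f_m}$ is finite, so that $\hat{f}_m$ is a bona fide density; this follows since $\Prob{w|\phi}\le 1$ and $\Prob{\phi}$ is a probability density, whence $C_{f_m}\le 1$. Specializing to $m=1$ recovers (\ref{eq:power_1_data_likelihood}) and provides a consistency check.
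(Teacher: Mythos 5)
Your proof is correct and follows exactly the paper's argument: sum $f_m$ over all path tuples, pull out $\Prob{\phi}$, factor the sum of products into a product of sums, and recognize each inner sum as $\Prob{w\mid\phi}$. The additional remarks on the finiteness of $C_{f_m}$ and the $m=1$ consistency check are sound but not needed.
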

\begin{proof}
Indeed, 
\begin{eqnarray*}
 C_{f_m} \cdot \Probu{m}{\phi | w}  = \sum_{p^1, \ldots, p^m} f_m(\phi, p^1, \ldots, p^m)  = \\
 \Prob{\phi} \sum_{p^1, \ldots, p^m} \prod_{j \leq m} \Prob{p^j, w | \phi} =  \\
 \Prob{\phi} \prod_{j \leq m} \sum_{p^j} \Prob{p^j, w | \phi} = \\
  \Prob{\phi} \Prob{w | \phi}^m.
\end{eqnarray*}
\end{proof}

Note that this generalizes (\ref{eq:power_1_data_likelihood}). The crucial detail about (\ref{eq:power_m_data_likelihood}) 
is that it contains the $m$-th power of the data likelihood. 
Let $\phi_{max}$, defined in (\ref{eq:phi_argmax}),  be a maximum likelihood solution given the data,
and let $\phi$ be any other parameter. Then, 
\begin{eqnarray*}
\Probu{m}{\phi_{max} | w} \big/ \Probu{m}{\phi | w} = \\
\Brack{\Prob{w | \phi_{max}} \big/ \Prob{w | \phi} }^m = \\
\Brack{ \Prob{\phi_{max} | w} \big/ \Prob{\phi | w} }^m. 
\end{eqnarray*}
In words, we are \textit{exponentially} more likely to obtain $\phi_{max}$ as a sample form $\hat{f}_m$ than from 
$\hat{f}$ (compared to any other $\phi$). Thus, as $m$ grows, any sample form $\hat{f}_m$ should yield $\phi$ with 
data likelihood close to that of $\phi_{max}$.

The discussion above shows why sampling from $\hat{f}_m$ yields better 
solutions than sampling from $\hat{f}$. We now discuss the Gibbs sampler 
for $\hat{f}_m$, which is a straightforward extension of the sampler for 
$\hat{f}$. The sampler is given schematically in Algorithm 
\ref{alg:m_path_gibbs}. For $m=1$, this is the standard Gibbs sampler. 

The notation $p_{-i}$ has 
the standard meaning of $p$ without the $i$-th 
component $p_i$. For the $m$-path model, our path is the collection of 
all paths $p^1,\dots,p^m$. However, when sampling 
and individual component $p_i^j$ from $p^j$, since the density 
(\ref{eq:fm_def}) factorizes over $j$, we obtain:

\begin{eqnarray*}
\Prob{p_i^j = s | p_{-i}^j, p^1,\ldots,p^{j-1},p^{j+1}, \ldots, p^m, w, \phi} = \\
\Prob{p_i^j = s |  p_{-i}^j , w, \phi} 
\end{eqnarray*}
for every $s \in \cS$. Thus, given the parameters $\phi$, one samples each path $p^j$ independently from the others. 

\begin{algorithm}
   \caption{Multi path Gibbs Sampler}
   \label{alg:m_path_gibbs}
\begin{algorithmic}[1]	
	\STATE {\bfseries Input:} Data $w$ of length $N$, number of paths $m$.
	\STATE {\bfseries Initialize} paths $p^1,\dots,p^m$ at random.
	\REPEAT
	\STATE Sample $\phi$ from $\Prob{\phi | p^1,\ldots,p^m, w}$.
	\FORALL{ $i \leq N$}
	\FORALL{ $j \leq m$}	
	\STATE Sample $p_i^j$ from $\Prob{p_i^j | p_{-i}^j, w, \phi}$. 
	\ENDFOR
	\ENDFOR
	\UNTIL{ iteration count threshold is reached.}    
\end{algorithmic}
\end{algorithm}

For the special cases of HMM and LDA, we write out Algorithm 
\ref{alg:m_path_gibbs} in full detail in Supplementary Material Section 
\ref{sec:explicit_implementations}. In particular, we detail the 
equations which describe how steps 4 and 7 of Algorithm 
\ref{alg:m_path_gibbs} are performed for these models. As mentioned 
earlier, these equations are straightforward modifications of the 
corresponding equations for the standard sampler. 
Also, note that for LDA, Algorithm 
\ref{alg:m_path_gibbs} corresponds to the so called partially collapsed 
Gibbs sampler, rather than to the more common fully collapsed version. 
However, a fully collapsed version of a sampler from $\hat{f_m}$ is 
equally easy to derive, and is also presented in Supplementary Material 
Section \ref{sec:explicit_implementations}.

As discussed above, the advantage of the multipath sampling 
comes from the fact that it implicitly samples $\phi$ from a distribution 
proportional to $\Prob{\phi | w}^m$. Note that while 
$\Prob{\phi | w}$ is not typically computable, the path likelihood 
$\Prob{\phi,p | w}$ \textit{is} computable. 
It is then natural to ask  what will happen if the Gibbs sampler is 
modified to sample the pair $(\phi,p)$ from a distribution proportional 
to $\Prob{\phi,p | w}^m$. This can easily be accomplished, using a single 
path. Such a sampler will spend much more time 
around the \textit{pairs} $(\phi,p)$ for which $\Prob{\phi,p | w}$ is 
high, compared to the regular Gibbs sampler. However, even if we were 
given a pair $(\phi',p')$ that maximizes 
$\Prob{\phi,p | w}$ globally over all $\phi,p$, it is straightforward to  
verify that $\phi'$ would not usually correspond to useful 
parameters. On the other hand, if the data is generated by a model with 
parameters $\psi$, then consistency results (such as, for instance, 
\cite{baum1966} for the case of HMMs) ensure that maximum data likelihood 
solutions, as in (\ref{eq:phi_argmax}), reconstruct $\psi$. The important 
feature of multipath sampler is therefore that it samples 
from $\Prob{\phi | w}^m$, even though $\Prob{\phi | w}$ is not 
computable.

Finally, it is worthwhile to remark that the multipath approach bears some 
resemblance to the 
EM algorithm. Indeed, in EM algorithm, for each $i\leq N$, one computes 
the \textit{distribution} of $p_i$ given all the data $w$ and parameters 
$\phi$. Then, given these distributions, one recomputes the parameters. 
One can view the multipath scheme as an approximation of the $p_i$ 
distribution with $m$ samples, instead of a computation of it in a closed 
form as in EM, or an approximation with one sample, as in regular Gibbs. 
A substantial difference between EM and the sampler is that our samples of 
$p_i$ are conditioned on the data \textit{and} the rest of the 
assignments, $p_{-i}$, while in EM the distribution is conditioned 
only on the data. However, interestingly, note that if the 
assignments $p_i$ are independent given the parameters, as in the case of 
 mixture models, than a multipath sampler directly 
approximates EM as $m$ grows.

\section{Experiments}
\label{sec:experiments}

In this Section we describe the HMM and LDA experiments. 
In addition, in Section \ref{sec:hmm_experiments}, after describing the 
HMM experiments we discuss Figure \ref{fig:c1_data_likelihoods}, referred 
to in Section \ref{sec:intro}, which demonstrates the local maxima behavior of 
the Gibbs sampler.

\subsection{HMMs}
\label{sec:hmm_experiments}

The HMM experiments use synthetic data. The data was generated by an HMM 
with two states, such that the transition
probability between the states is $0.45$, and the emissions are 
distributions on $W=10$ symbols. The 
emissions are show in Figure \ref{fig:emissions_rp5}  as the ground 
truth. All the experiments were conducted using the same 
fixed random sample of size $N=200000$ from this HMM.

These particular parameters were chosen in order to make the HMM hard to 
learn. Otherwise, all methods would succeed all the time, and would be 
difficult to compare. This corresponds to realistic situations where 
different emissions are hard to distinguish.

Note that while $N=200000$ may appear as a high number of samples, 
in this case, due to transitions probabilities which are close to $0.5$, 
this high number of samples is in fact necessary.  As we discuss later, 
even with this number of samples, both the EM algorithm and some runs 
of the Gibbs sampler manage to slightly overfit the solution.

To put the HMM learning problem in the Bayesian setting, we endow the set 
of parameters with a prior 
where transition and emission distributions are sampled from a uniform 
Dirchlet, $Dir(1)$. In other words, 
transition distributions are sampled uniformly from a 3-simplex, and 
emission distributions from an 
11-simplex.  The data $w$ is the given emissions of the HMM, and the path 
$p$ corresponds to the latent sequence of the states of the HMM.

Figure \ref{fig:c1_data_likelihoods} demonstrates the local maxima 
behavior. It shows the data log-likelihoods of a solution along a run of 
a collapsed Gibbs sampler, for 16 independent runs of a collapsed Gibbs 
sampler. 

Each run had 200000 iterations, with data log-likelihoods computed every 
100 iterations. As discussed in the Introduction, 
each run converges fairly quickly to an area of fixed likelihood, and 
stays there for extremely long time 
intervals.

\begin{figure}
  \centering
  \includegraphics[width=.5\textwidth]{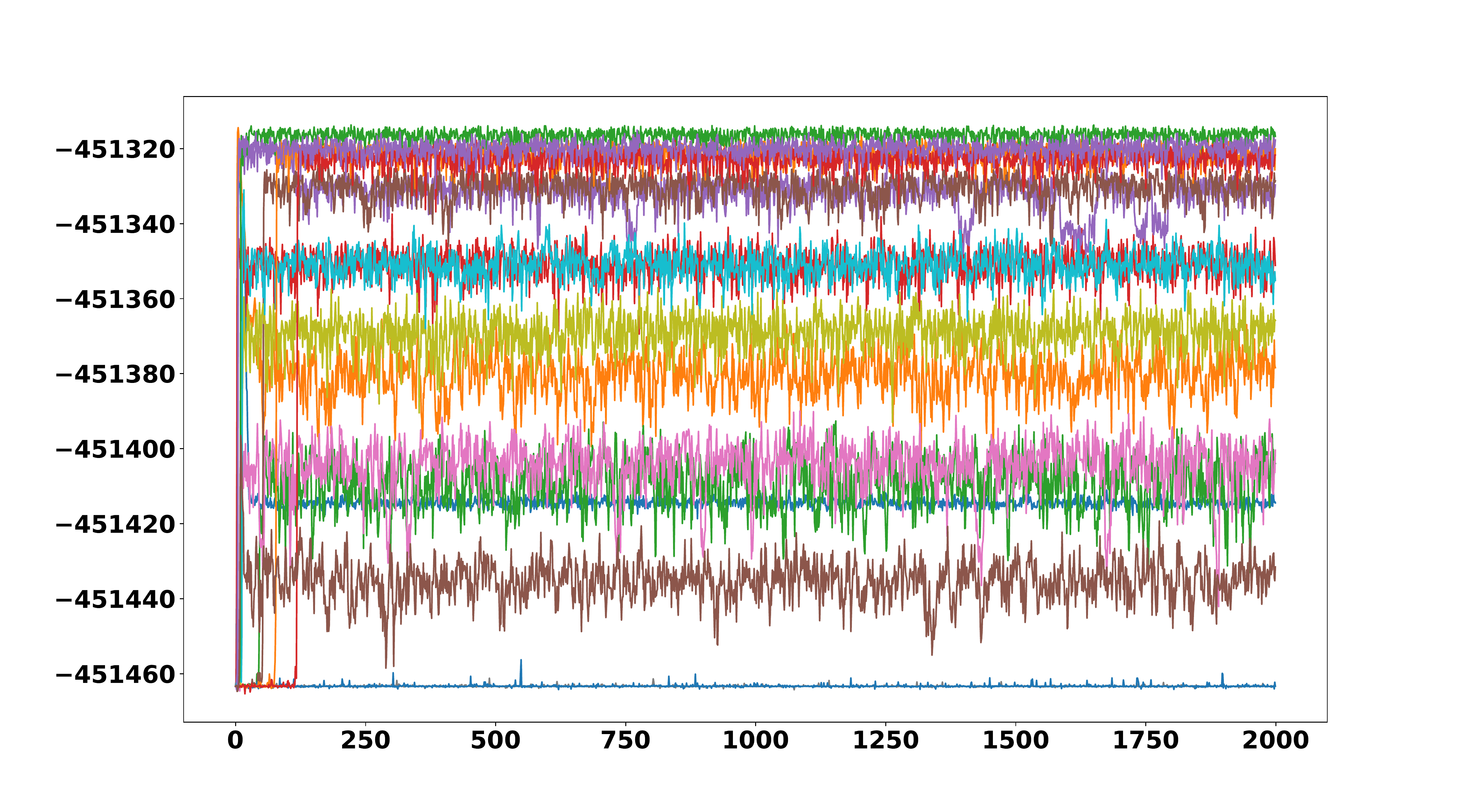}
  \caption{Collapsed Gibbs HMM Data Likelihoods, illustrating the Local Maxima. 
    Measurement is performed every 100 iterations. Total of 200000 iterations.}
  \label{fig:c1_data_likelihoods}
\end{figure}%

Figure \ref{fig:hmm_data_likelihood} shows the data log-likelihoods of 
the parameters $\phi$ for multiple runs of different algorithms after 200000 iterations. 
Each algorithm was run independently 16 times (all on the same data), and the resulting data 
log-likelihoods are plotted in ascending order. The algorithms included are the Collapsed Gibbs sampler for 1 and 
five paths (C 1, 5) and a non-collapsed sampler (PC 1, 5). 

For reference, the constant line (BW) corresponds to a solution found by the Baum-Welch algorithm, and Ground 
line corresponds to the data likelihood for the ground truth parameters. Note that the Ground line is slightly 
lower than the highest likelihood solutions, even with the relatively high number of samples $N=200000$. 

\begin{figure}
  \centering
  \includegraphics[width=.5\textwidth]{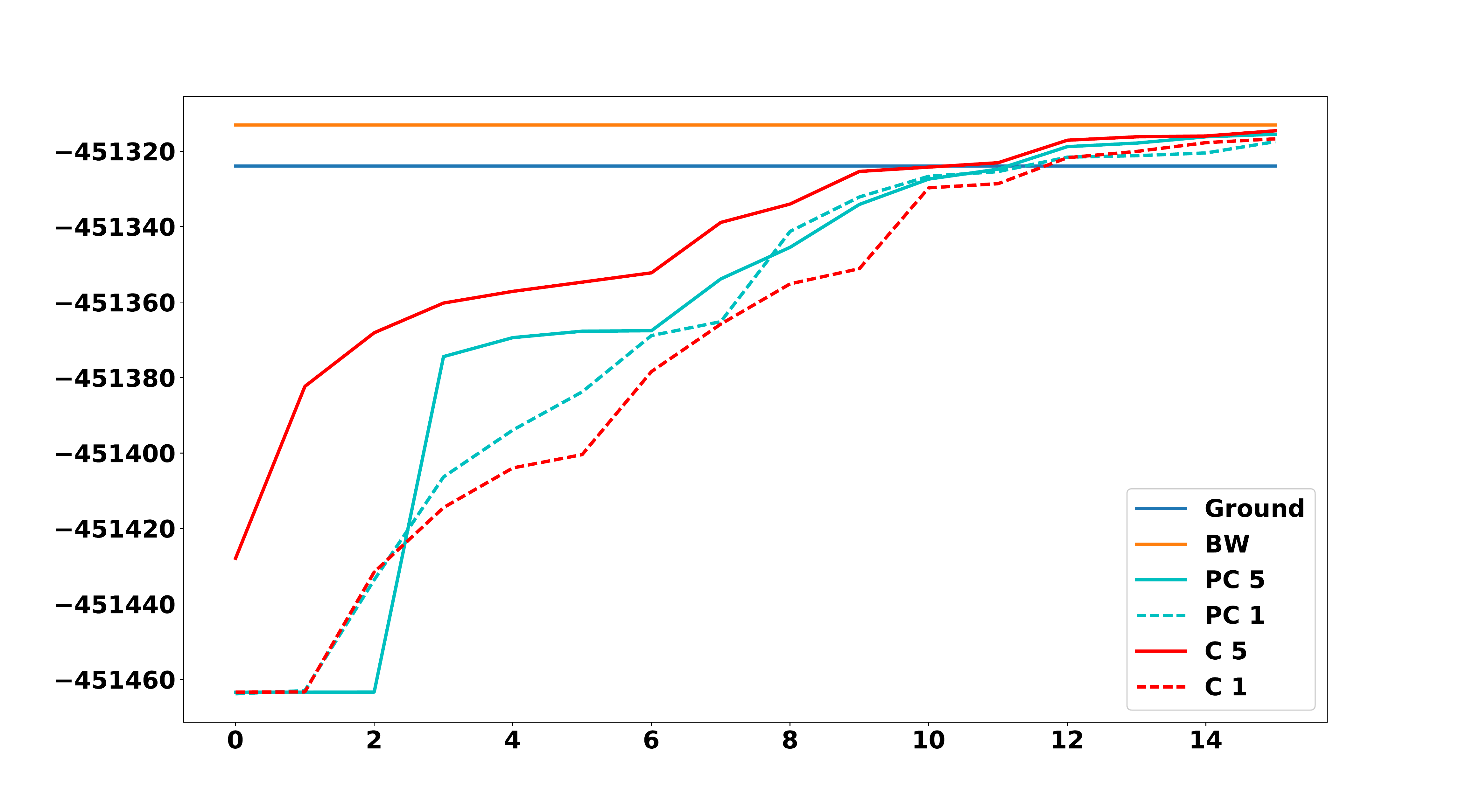}
  \caption{Data likelihoods, all algorithms}
  \label{fig:hmm_data_likelihood}
\end{figure}%

Clearly most of the algorithms improve on the base-line -- the Collapsed 
Gibbs sampler C1.  The the non-
collapsed sampler, PC1, which is also a base-line, provides results 
similar to those of C1. The 
best results are obtained by C 5. The emissions found by C5 run 
with the highest data likelihood are shown in Figure 
\ref{fig:emissions_rp5}. 
Note that the imperfect approximation of the emissions may be explained 
by a mild overfitting due to the barely sufficient 
amount of data.

\begin{figure}
  \centering
  \includegraphics[width=.5\textwidth]{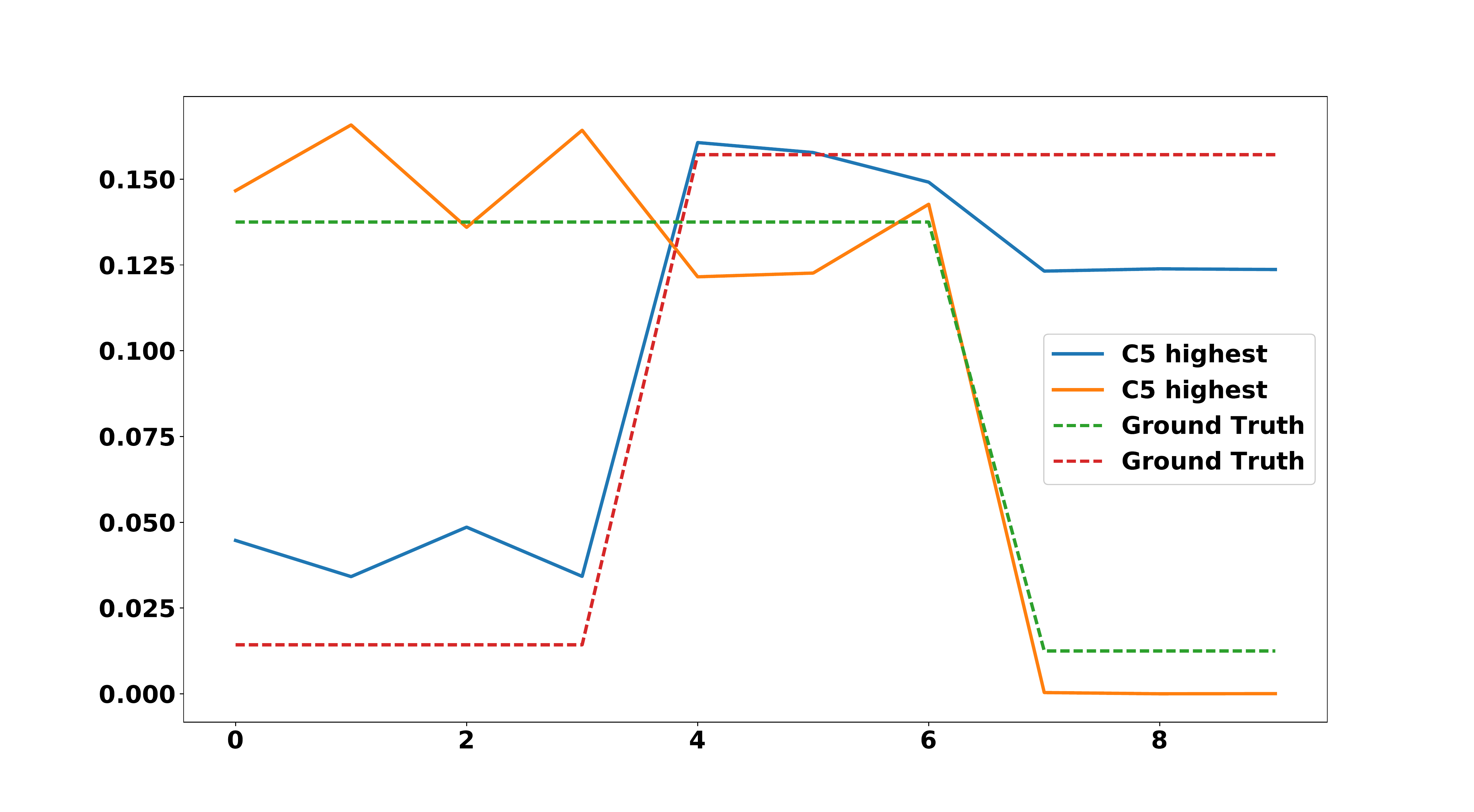}
  \caption{Emissions of highest C5}
  \label{fig:emissions_rp5}
\end{figure}%

\subsection{LDA}
\label{sec:lda_experiments}
In this Section we describe the LDA experiments. 

\subsubsection{Synthetic Data}
The $T = 10$ topics for the synthetic data are shown in Figure 
\ref{fig:lda_synth}. They were constructed as follows: Choose 10 overlapping bands of equal length (except the 
boundary ones) on a dictionary of size $W=100$, centered at equally spaced intervals over 
$0,\ldots,99$. Each topic is a linear combination of a band, with weight 
$0.95$ and a uniform distribution on $W$, with weight $0.05$.  Most of the 
points in the dictionary belong to two topics (if one disregards the 
uniform component present in all topics). 
\begin{figure}
  \centering
  \includegraphics[width=.5\textwidth]{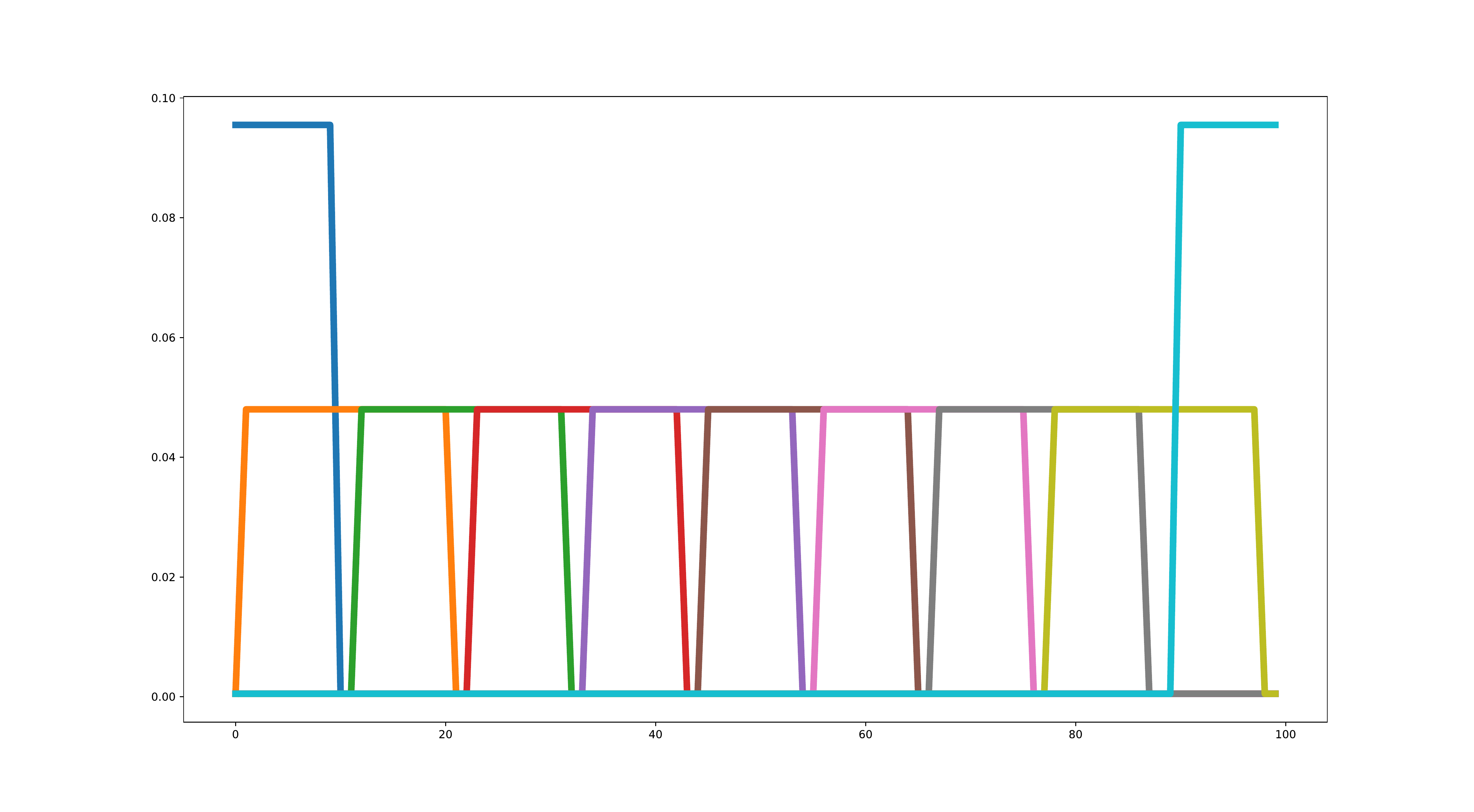}
  \caption{LDA Synthetic Topics}
  \label{fig:lda_synth}
\end{figure}%

Each document is generated by sampling the topic distribution 
$\theta \sim Dir(\alpha)$, with $\alpha = 1$, and sampling 
10 tokens by sampling a topic from $\theta$, and a token from that topic. 

We have generated a varying number of documents from this model, and 
have run the collapsed LDA Gibbs sampler with a varying number of paths. 
Given the topics $\beta_1, \ldots, \beta_{10}$ found by the sampler, 
to compute a distance to ground truth,$disc$, 
for each ground truth topic $
\beta^g_i$, we find a topic $\beta_j$ closest to 
$\beta^g_i$ in total variation distance, and take the mean of these 
distances:
\begin{equation}
disc = \frac{1}{10} \sum_{i} \min_j \norm{\beta^g_i - \beta_j}_{1}
\nonumber
\end{equation}
In addition, each $disc$ was averaged over 10 independent runs of the 
sampler (with same data and same number of paths). Samplers with different 
number of sources, but same number of documents, were run on the set of 
documents. All samplers were run for 10000 iterations, and in all cases
already after 3000 the topics stabilized and practically did not change 
further. The results are shown in Table \ref{tbl:synth_results}. 
Clearly, increasing the number of paths yields better reconstruction. 
Increasing the number of paths to more than 5 improved the performance 
further in some, but not all cases. 

\begin{table}[t]
\caption{Topic reconstruction results for synthetic LDA data, for varying number of paths and documents}
\label{tbl:synth_results}
\begin{center}
\begin{small}
\begin{sc}
\begin{tabular}{lcccr}
\toprule
 $m$ (paths num.) & 1500 & 3000 & 6000 & 9000 \\
\midrule
1    & 1.12 & 0.79 & 0.58 & 0.43\\
2    & 0.86 & 0.59 & 0.41 & 0.32\\
3    & 0.77 & 0.54 & 0.35 & 0.29\\
5    & 0.69 & 0.46 & 0.31 & 0.24\\
\bottomrule
\end{tabular}
\end{sc}
\end{small}
\end{center}
\vskip -0.1in
\end{table}

\subsubsection{SOTU Dataset}
\label{sec:sotu}
Our State of the Union speeches corpus contains speeches from years 1790 
to 2013. The 
preprocessing included stop words removal, stemming, removal of 40 most 
common words, and words that appeared only once in the corpus. 
Each speech was divided into paragraphs, and each paragraph
was treated as a document. After the preprocessing, the corpus contains 
around 767000 tokens, in about 21000 documents. There are on average 
50-80 documents corresponding to a single year. All the models were fit 
with $T=500$ topics and prior parameters $\eta = 0.01$ and $\alpha = 10/T 
= 0.02$. 

As discussed in Section \ref{sec:intro_experiments}, we compare the time-
concentration properties of LDA models on the State of the Union corpus, 
found by the collapsed Gibbs sampler (referred to as LDA in what follows), and by multipath collapsed Gibbs 
sampler with $m=5$ paths (to which we refer as mpLDA). We quantify the time concentration of topics in 
two different ways, via the computation of yearly entropy, and via the notion of quantile buckets, as discussed below.

Given the topic assignments $p$ returned by the sampler, we compute 
several quantities as discussed below. 
For each document $d$, let $\theta_d$ be the topic 
distribution ( a probability distribution on the set
$0,\ldots,T-1$) of the document, as computed by the sampler. 
Specifically, if $i_1^d, \ldots, i_{k}^d$ are the indexes of the tokens 
in a document $d$, with $k=\Abs{d}$ the size of the document, then 
\begin{equation*}
\theta_d = \frac{1}{k} \sum_{l \leq k} \delta_{p_{i_k^d}}. 
\end{equation*}

For the multipath sampler, we 
take the assignments from the first path $p^1$, for convenience. 
Typically, after convergence, for most tokens, 
all paths have an identical assignment for a given token.

Next, for each year, we compute the topic distribution: Let $\mathcal{Y}_y$ be the set of documents 
from year $y$. Then the topic distribution of the year $y$ is 
\begin{equation}
\nonumber
\theta_y = \frac{1}{\Abs{\mathcal{Y}_y}} \sum_{d \in \mathcal{Y}_y} \theta_d.
\end{equation}
We refer to $\theta_y(t)$ as the topic weight of topic $t$ for year $y$. 
The Figure \ref{fig:welfare_topic_compare} shows in blue the topic weights for all years of a particular topic 
$t$ in a model found by the collapsed Gibbs sampler. The 9 words with highest probabilities in this topic are indicated 
in the figure. In addition, we have found the topic $t'$ in the model returned by the Mulipath Gibbs, and $t''$ for the 
aternating sampler which are closest to $t$ in total variation distance (as distributions over words).  The topic weights of 
$t'$ and $t''$ are shown in orange and green. 

\begin{figure}
  \centering
  \includegraphics[width=.5\textwidth]{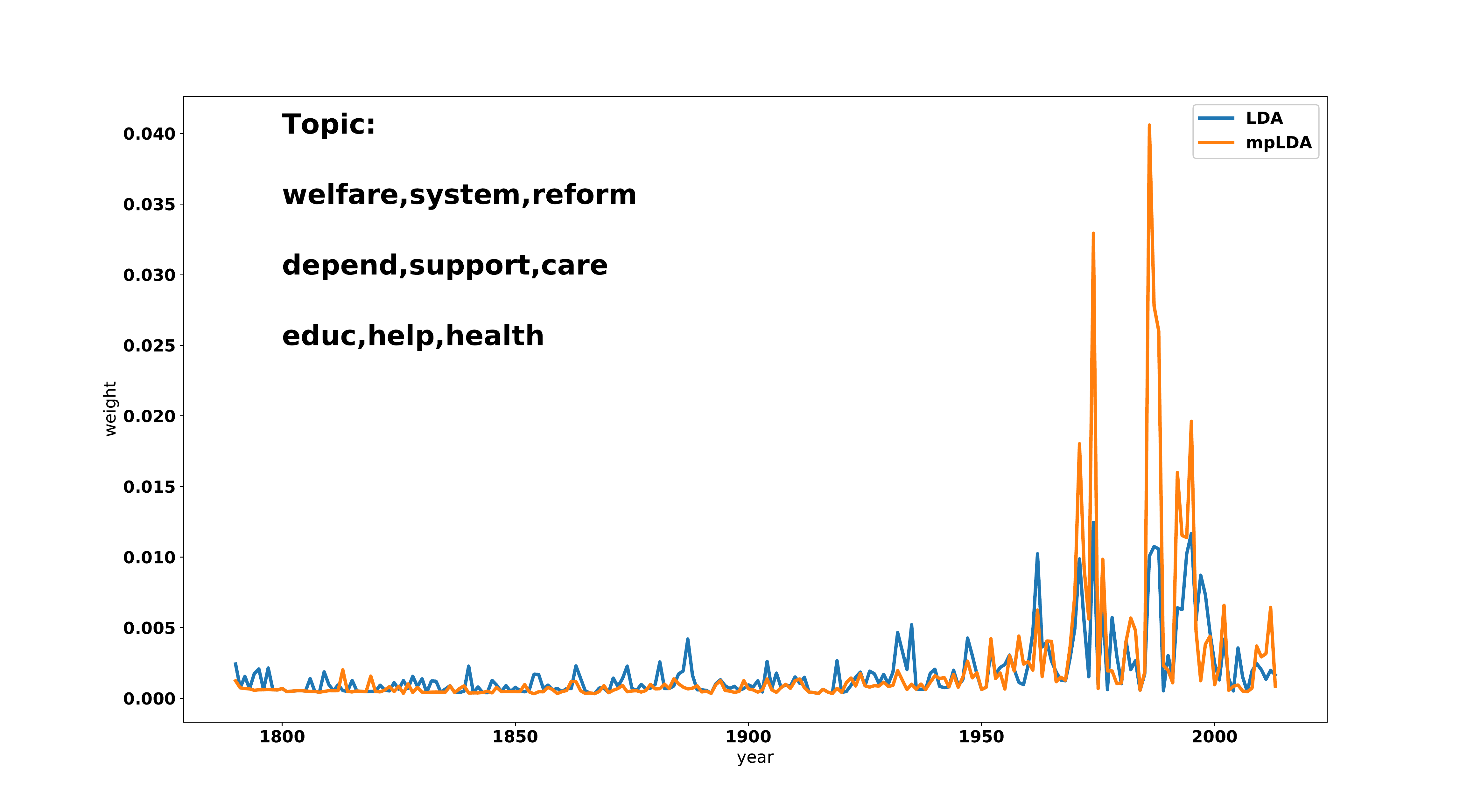}
  \caption{Topic Comparison LDA/mpLDA}
  \label{fig:welfare_topic_compare}
\end{figure}

As Figure \ref{fig:welfare_topic_compare} shows, the orange curve has more weight on the years where both
blue and orange curves have high values, and it has less weight in other areas. Note that the second statement here 
(less weight) does not follow from the first (more weight) -- the weight of the topic over the years does 
not have to sum to 1 (the sum over all topics for a fixed year is 1). We conclude that the orange topic 
is more time concentrated than the blue.  Similar conclusion also holds for the green topic.

We now proceed to show that the situation depicted in Figure \ref{fig:welfare_topic_compare} for a fixed topic, is in fact 
typical for most topics, and overall the topics of mpLDA are better concentrated than those of LDA. First, consider the 
entropy of the topic distribution $\theta_{y}$ for every year, for all models, as shown in Figure
\ref{fig:buckets_entropies} (right). The entropies of mpLDA are similar  consistently lower than those of LDA for every 
year. This means that each year in the new models is composed of slightly fewer topics, with higher weights.

Second, given the topic weights for a fixed topic, we quantify its time concentration via the notion 
of quantile bucket length. A $\gamma$-quantile bucket of the weights is a time interval which contains 
a $\gamma$ proportion of the total weight of the topic. For instance, Figure \ref{fig:quantile_buckets_demo}
show $\gamma=0.1$ quantile buckets for topics from Figure \ref{fig:welfare_topic_compare}. Shorter buckets
mean that a fixed proportion of the weight is contained in a shorter time interval. Note that the short
buckets of the orange curve in Figure \ref{fig:quantile_buckets_demo} are shorter than than those of the 
blue curve. 

\begin{figure}
  \centering
  \includegraphics[width=.5\textwidth]{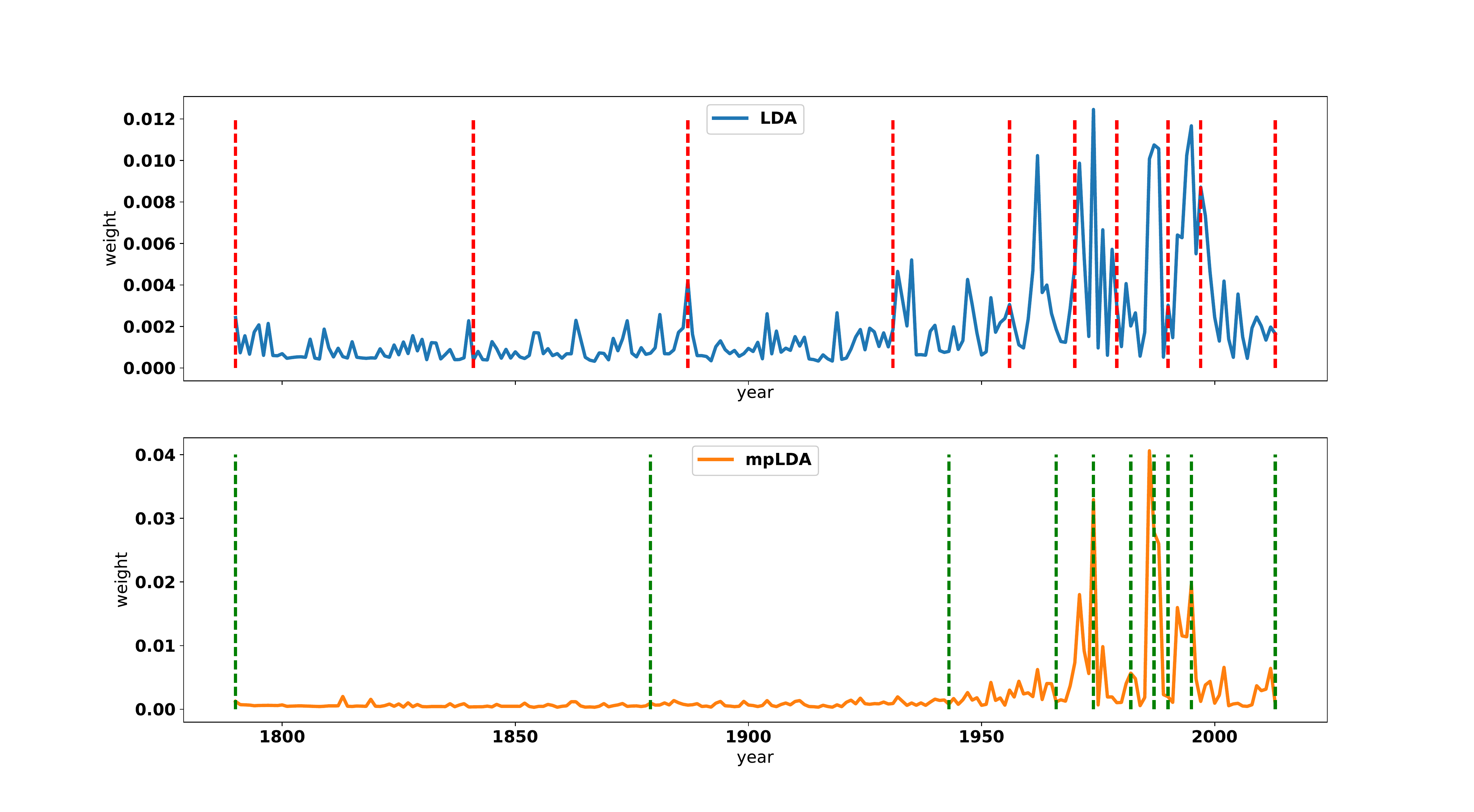}
  \caption{$0.1$-quantile buckets}
  \label{fig:quantile_buckets_demo}
\end{figure}

In Figure \ref{fig:buckets_entropies} (left), the distribution of the bucket lengths, with 
$\gamma=0.05$, for all the buckets from all the topics is shown, for all models. Since the sum 
of each 20 buckets contributed by a single topic is 224, the total number of years, the expectation
of each histogram must be $11.2$. We observe that the blue histogram is centered around this value, 
indicating more or less uniform (over time) weight distributions for most of the topics of LDA.
On the other hand, the orange histogram is clearly skewed to the left, which implies that there are
more short intervals containing a fixed weight in mpLDA. 

Some additional measurements related to this experiment may be found in 
Supplementary Material Section \ref{sec:sotu_time_supp}.
\begin{figure}
  \centering
  \includegraphics[width=.5\textwidth]{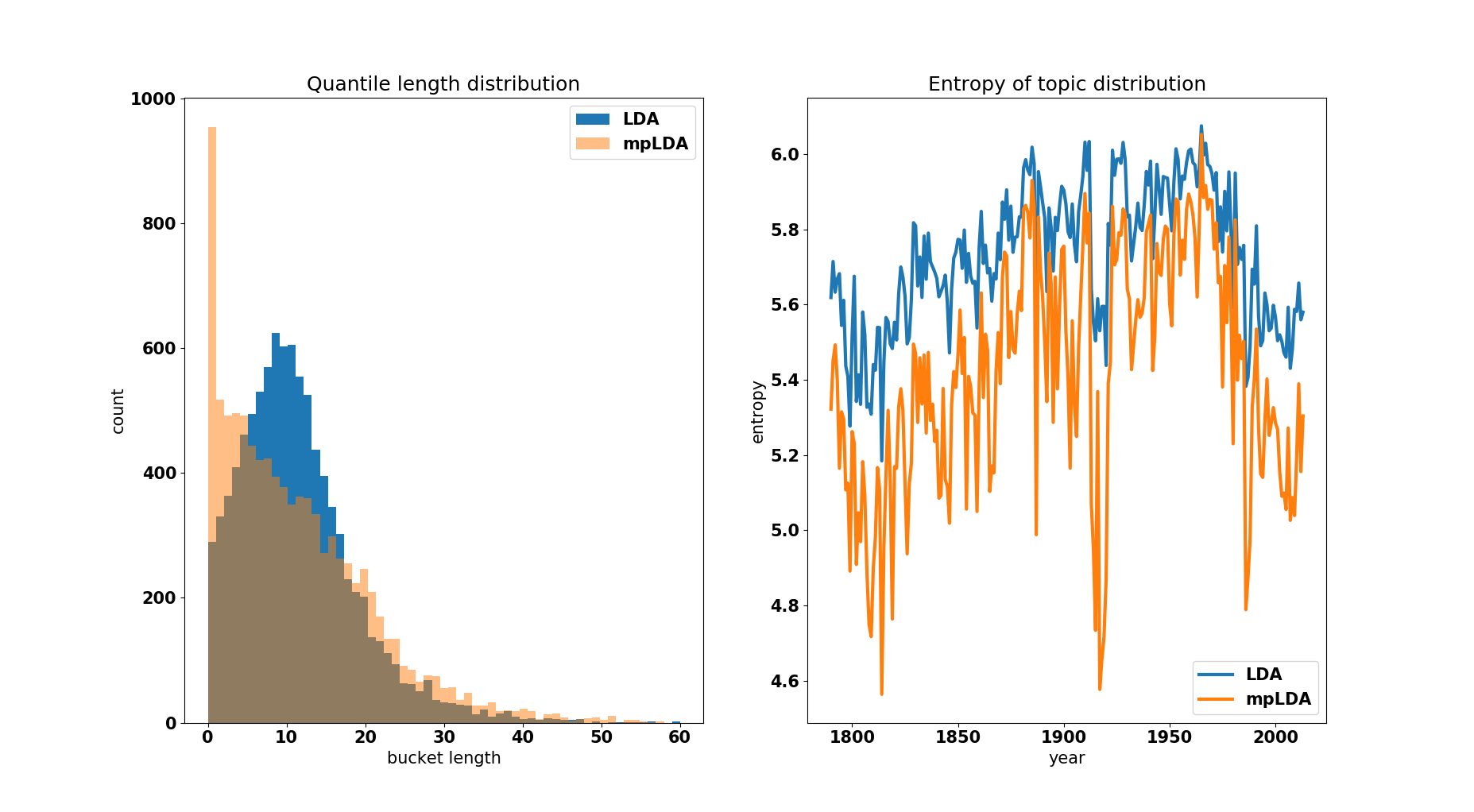}
  \caption{Bucket Lengths and Yearly Entropies}
  \label{fig:buckets_entropies}
\end{figure}

\section{Conclusion and Future Work}
\label{sec:conclusion}

In this work we have introduced a modification of the Gibbs sampler,
which combines samples from multiple sets of latent variables and yet 
avoids the identifiability problem. We have shown that asymptotically 
this sampler is exponentially more likely, compared to the regular Gibbs 
sampler, to return samples which have high data likelihood in the 
original generative model. We have also shown empirically that the quality 
of the solutions indeed improves, as the number of paths grows. 

In addition, we found that time concentration properties of topic
models may improve when using the multipath sampler, compared to 
regular Gibbs. This is one instance of a phenomenon where a better 
optimization of a model yields better insight into the data. 
However, full implications of this fact, in particular in the context
of trend analysis, would require a more extensive investigation.  

While we have tested our methods on the HMM and LDA models, our methods 
were formulated in a fairly generic latent variables settings, and could 
be extended even further. Thus, a natural direction for future work would 
be to understand the impact of the methods presented here on
learning in other models, such as Markov Random Fields or extended HMMs 
with large state spaces, for which EM algorithms are not feasible. 

\bibliographystyle{apalike}
\bibliography{multipath_bib}

\newpage

\appendix

\section{Additional Considerations Regarding Time Concentration}
\label{sec:sotu_time_supp}
In Section \ref{sec:sotu} we have found that for topics and topic 
assignments from the multipath Gibbs sampler, the distribution of bucket 
lengths is more skewed  to the left. 

However, note that there might have existed degenerate
topic assignments, which could have produced the concentration results of Figure \ref{fig:buckets_entropies}. 
Indeed, consider an assignment where there is a single topic that is assigned to a major portion of every document, 
with the rest of the topics distributed in a concentrated way over the years. Such an assignment would lower the yearly 
entropy and produce a left skew in bucket lengths, but, arguably, should not be considered a better description of the data. 
To ensure that this is not the case, for each topic $t \leq T$, consider the total topic weight
\begin{equation}
w_t = \sum_y \theta_y(t). 
\end{equation}
In Figure \ref{fig:buckets_weighted}, we show the sorted topic weights for both models (right) and the histogram of bucket 
lengths (left), where each bucket length is weighted proportionally to the topic weight $w_t$ of its topic (rather then by 1, 
as in computation for Figure \ref{fig:buckets_entropies}). The topic weight distribution is practically the same for both models, and the 
lengths histograms also remain practically unchanged. This shows that the mpLDA genuinely redistributes the same topic 
weights in a more time concentrated fashion, rather than produces anomalous assignments. 

\begin{figure}
  \centering
  \includegraphics[width=.5\textwidth]{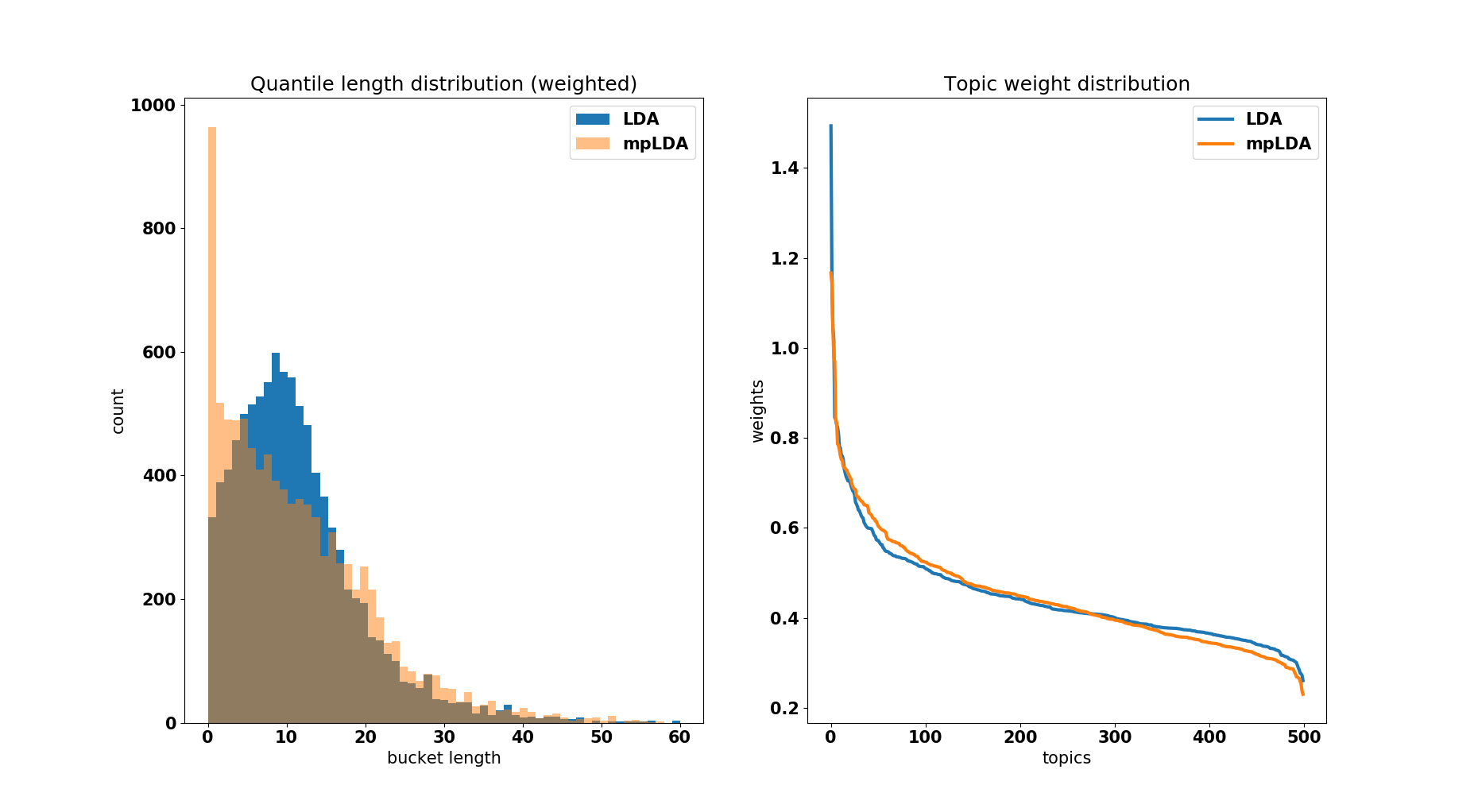}
  \caption{Weighted Bucket Lengths, Topic Weights}
  \label{fig:buckets_weighted}
\end{figure}

\section{Explicit Implementations}
\label{sec:explicit_implementations}

In this Section we describe a fully detailed implementation of a 
multipath Gibbs sampler Algorithm \ref{alg:m_path_gibbs} for LDA. 
As discussed in Section \ref{sec:multiple_paths}, in order to fully 
specify Algorithm \ref{alg:m_path_gibbs} for a particular generative 
model, one has to implement the sampling in lines 4 and 7 of the 
Algorithm. Line 4 corresponds to line 6 in Algorithm \ref{alg:lda_gibbs}, 
and line 7 to lines 12-19.

In addition, we describe the multipath analog of the \textit{collapsed} Gibbs 
sampler. Schematically, the multipath collapsed Gibbs sampler is given in 
Algorithm \ref{alg:m_path_collapsed_gibbs_schem}, and 
Algorithm \ref{alg:lda_collapsed_gibbs} is a full implementation. 

The notation for the algorithms is summarized in Table 
\ref{tbl:lda_notation}, and is analogous to the one commonly 
used in the literature. 

Note that for $m=1$ both Algorithm \ref{alg:lda_gibbs} and
\ref{alg:lda_collapsed_gibbs} coincide with their standard counterparts.

All computations in the derivation of the algorithms are standard 
Dirichlet conjugacy type 
computations. We omit the computations, since they are completely 
analogous to the computations for the 
derivation of the regular LDA Gibbs sampler. The algorithms for the
HMM model are also derived using similar (slightly simpler) computations. 

Please see the next page for the notation and the algorithms. 

\begin{algorithm}
   \caption{Multipath Collapsed Gibbs Sampler}
   \label{alg:m_path_collapsed_gibbs_schem}
\begin{algorithmic}[1]	
	\STATE {\bfseries Input:} Data $w$ of length $N$, number of paths $m$.
	\STATE {\bfseries Initialize} paths $p^1,\dots,p^m$ at random.
	\REPEAT
	\FORALL{ $i \leq N$}
	\FORALL{ $j \leq m$}	
	\STATE Sample $p_i^j$ from \\ 
	       \spaceo $\Prob{p_i^j | p^1,\ldots,p^j_{-i},\ldots,p^m, w}$.
	\ENDFOR
	\ENDFOR
	\UNTIL{ iteration count threshold is reached.}    
\end{algorithmic}
\end{algorithm}

\begin{table*}[t]
  \caption{LDA Notation}
  \label{tbl:lda_notation}
  \centering
  \begin{tabular}{ll}
    \toprule
    Name     & Description      \\
    \midrule
	\multicolumn{2}{c}{Model Notation}  \\
	\midrule	
    $T$  & Number of topics   \\
    $W$     & Dictionary size  \\
    $m$     & Number of paths in a multipath model       \\    
    $\eta$   & Hyperparameter for words in a topic       \\
    $\alpha$   & Hyperparameter for topics in a document \\
    $\beta_i$       & Topics, $i\leq T$. \\
    \midrule
	\multicolumn{2}{c}{Corpus Notation} \\
	\midrule	
	$N$      & Total number of tokens in a corpus \\
	$D$      & Total number of documents in a corpus \\
	$w_i$ &  The $i$-th token in a corpus. $i\leq N$ and 
							$w_i \leq W \spaceo \forall i \leq N$   \\
	$\mathcal{D}_i$ & The document of the $i$-th token in a corpus. $i\leq N$ and 
							$\mathcal{D}_i \leq D \spaceo \forall i \leq N$   \\
    \midrule
	\multicolumn{2}{c}{Sampler Notation}   \\
	\midrule	
	$p^j_i$ & Topic assignment to the $i$-th word in a corpus. $i\leq N$,$j \leq m$ and 
							$p^j_i \leq T \spaceo \forall i,j $    \\
	$\ctw_{tw}$  & Topic-word counter, $\ctw_{tw} = \sum_{j \leq m} \Abs{\Set{i \leq N \setsep 
                                                       w_i = w \wedge 
                                                       p^j_i = t
                                                       }
                                                       }
                                                       $   \\

	$\ct_t$ & Topic occurrence count, $\ct_t = \sum_{w} \ctw_{tw}$ \\				
	$\cd_d $ & Document occurrence count (size of document $d$), $\cd_d = \Abs{\Set{i\leq N \setsep \mathcal{D}_{i} = d}}$  \\	
    $\cdt_{jdt}$ & Document-topic counter, $\cdt_{jdt} = 
													\Abs{\Set{i \leq N \setsep 
                                                       \mathcal{D}_i = d \wedge 
                                                       p^j_i = t
                                                       }
                                                       }
                                                       $ with $j \leq m$  \\
    \bottomrule
  \end{tabular}
\end{table*}

\begin{algorithm*}[t]
   \caption{Multipath Partially Collapsed LDA Gibbs Sampler}
   \label{alg:lda_gibbs}
\begin{algorithmic}[1]	
	\STATE {\bfseries Input:} Hyperparameters $\alpha,\eta$, data $\mathcal{W},\mathcal{D}$.
	\STATE {\bfseries Initialize} $p^j_i$ to random integers in range $[1,T]$.
	\STATE {\bfseries Initialize} $\ctw, \cdt, \ct$ from $p$.
	\REPEAT

	\FORALL{ $t \leq T$}
	\STATE Sample $\beta_t \gets Dir(\eta + \ctw_{t,0}, \ldots, \eta + \ctw_{t,W-1})$
	\ENDFOR
	
	\FORALL{ $j \leq m$}
	\FORALL{ $i \leq N$}

	\STATE $w \gets w_i$, $d \gets \mathcal{D}_i$. 
	\STATE $z \gets p^j_i$.
	\STATE $\ctw_{zw} \gets \ctw_{zw} -1$,\spaceo $\cdt_{jdz} \gets \cdt_{jdz} -1$, \spaceo
					$\ct_{z} \gets \ct_{z} -1$.
	\FORALL{ $t \leq T$}
	\STATE $r_t \gets \beta_t(w) 
	   \cdot \Brack{\cdt_{jdt} + \alpha}$.
	\ENDFOR
	\STATE $r \gets \sum_{t\leq T} r_t$.	
	\STATE Sample $z' \gets multinomial\Brack{\frac{r_1}{r}, \ldots,\frac{r_T}{r} }$.
	\STATE $p^j_i \gets z'$.
	\STATE $\ctw_{z' w} \gets \ctw_{z'w} +1$,\spaceo $\cdt_{jdz'} \gets \cdt_{jdz'} +1$, \spaceo
					$\ct_{z'} \gets \ct_{z'} +1$.
	\ENDFOR
	\ENDFOR

	\UNTIL{ iteration count threshold is reached.}    
\end{algorithmic}
\end{algorithm*}

\begin{algorithm*}[t]
   \caption{Multipath Collapsed LDA Gibbs Sampler}
   \label{alg:lda_collapsed_gibbs}
\begin{algorithmic}[1]	
	\STATE {\bfseries Input:} Hyperparameters $\alpha,\eta$, data $\mathcal{W},\mathcal{D}$.
	\STATE {\bfseries Initialize} $p^j_i$ to random integers in range $[1,T]$.
	\STATE {\bfseries Initialize} $\ctw, \cdt, \ct$ from $p$.
	\REPEAT

	\FORALL{ $j \leq m$}

	\FORALL{ $i \leq N$}
	\STATE $w \gets w_i$, $d \gets \mathcal{D}_i$. 
	\STATE $z \gets p^j_i$.
	\STATE $\ctw_{zw} \gets \ctw_{zw} -1$,\spaceo $\cdt_{jdz} \gets \cdt_{jdz} -1$, \spaceo
					$\ct_{z} \gets \ct_{z} -1$.
	\FORALL{ $t \leq T$}
	\STATE $r_t \gets \frac{\ctw_{tw} + \eta}{\ct_{t} + T \cdot \eta} 
	   \cdot \Brack{\cdt_{jdt} + \alpha}$.
	\ENDFOR
	\STATE $r \gets \sum_{t\leq T} r_t$.	
	\STATE Sample $z' \gets multinomial\Brack{\frac{r_1}{r}, \ldots,\frac{r_T}{r} }$.
	\STATE $p^j_i \gets z'$.
	\STATE $\ctw_{z' w} \gets \ctw_{z'w} +1$,\spaceo $\cdt_{jdz'} \gets \cdt_{jdz'} +1$, \spaceo
					$\ct_{z'} \gets \ct_{z'} +1$.
	\ENDFOR
	\ENDFOR

	\UNTIL{ iteration count threshold is reached.}    
\end{algorithmic}
\end{algorithm*}

\end{document}